\newcommand\bcmdtab{\noindent\bgroup\tabcolsep=0pt%
  \begin{tabular}{@{}p{10pc}@{}p{20pc}@{}}}
\newcommand\ecmdtab{\end{tabular}\egroup}
  \title[Theory and Practice of Logic Programming]
        {Learnability with PAC Semantics for \\   Multi-agent Beliefs}
  \author[]{IONELA G. MOCANU\\
         The University of Edinburgh\\
         \email{i.g.mocanu@ed.ac.uk}\\\\
         \normalsize \textnormal{VAISHAK BELLE}\\
         The University of Edinburgh\\
         \email{vbelle@ed.ac.uk} \\\\
         \normalsize \textnormal{BRENDAN JUBA} \\
         Washington University in St. Louis\\
         \email{bjuba@wustl.edu}
         }
\newtheorem{THEOREM}{Theorem}
                        {\end{THEOREM}}
\newtheorem{LEMMA}[THEOREM]{Lemma}
                      {\end{LEMMA}}
\newtheorem{COROLLARY}[THEOREM]{Corollary}
                          {\end{COROLLARY}}
\newtheorem{PROPOSITION}[THEOREM]{Proposition}
                            {\end{PROPOSITION}}
\newtheorem{DEFINITION}[THEOREM]{Definition}
                            {\end{DEFINITION}}
\newtheorem{CLAIM}[THEOREM]{Claim}
                            {\end{CLAIM}}
\newtheorem{EXAMPLE}[THEOREM]{Example}
\newtheorem{REMARK}[THEOREM]{Remark}
                            {\end{REMARK}}
\newtheorem{NOTATION}[THEOREM]{Notation}
							                            {\end{NOTATION}}
\begin{document}

\label{firstpage}

\maketitle

  \begin{abstract}
The tension between deduction and induction is perhaps the most fundamental issue in areas such as philosophy, cognition and artificial intelligence. In an influential paper, {\it Valiant} recognized that the challenge of learning should be integrated with deduction. In particular, he proposed a semantics to capture the quality possessed by the output of {\it Probably Approximately Correct} (PAC) learning algorithms when formulated in a logic. Although weaker than classical entailment, it allows for a powerful model-theoretic framework for answering queries. In this paper, we provide a new technical foundation to demonstrate PAC  learning with multiagent epistemic logics. To circumvent the negative results in the literature on the difficulty of robust learning with the PAC semantics, we consider so-called implicit learning where we are able to incorporate observations to the background theory in service of deciding the entailment  of an epistemic query. We prove correctness of the learning procedure and discuss results on the sample complexity, that is how many observations we will need to provably assert that the query is entailed  given a user-specified error bound. Finally, we investigate under what circumstances this algorithm can be made efficient. On the last point, given that reasoning in epistemic logics especially in multi-agent epistemic logics is {PSPACE}-complete, it might seem like there is no hope for this problem. We leverage some recent results on the so-called {\it Representation Theorem} explored for single-agent and multi-agent epistemic logics with the \textit{only knowing} operator to reduce modal reasoning to propositional reasoning.  
  \end{abstract}

  \begin{keywords}
 Multi-Agent Systems, Knowledge Acquisition, Only-Knowing, Efficient Reasoning.
  \end{keywords}

\tableofcontents

\section{Introduction}

An increasing number of agent-based technologies, which involve automated reasoning, such as self-driving cars or house robots are widely deployed. In particular, many \textit{AI} applications model environments with multiple agents, where each agent acts using their own knowledge and beliefs to achieve goals either by coordinating with the other agents or by challenging an opponent's actions  in a competitive context. Reasoning not just about the agent's world knowledge but also about other agents' mental state is referred to as \textit{epistemic reasoning}, for which a variety of modal logics have been developed \citep{reasoning:about:knowledge}. Epistemic modal  logic is widely recognised as a specification language for a range of domains, including  robotics, games, and  air traffic control \citep{belardinelli2007}.

While a number of sophisticated formal logics have been proposed for modelling such contexts, from areas such as philosophy, knowledge representation and game theory, they do not, to a large extent, address the problem of \textit{knowledge acquisition}. Classically, given a set of observations, the most common approach is that of explicit hypothesis construction, as seen in \textit{inductive logic programming} \citep{Muggleton1994} and \textit{statistical relational learning} \citep{getoor2007introduction,de2011statistical}. Here, we construct sentences in the logic that either entail observations or capture associations in those with high probability. By contrast, a recent line of work initiated the idea of an \textit{implicit knowledge base} constructed from observations \citep{juba2013}. The implicit approach avoids the construction of an explicit hypothesis but still allows us to reason about queries against noisy observations. This is motivated by tractability: in agnostic learning \citep{kearns1994toward}, for example, where one does not require examples (drawn from an arbitrary distribution) to be fully consistent with learned sentences, efficient algorithms for learning conjunctions in propositional logic would yield an efficient algorithm for PAC-learning DNF (also over arbitrary distributions), which current evidence suggests to be intractable \citep{daniely2016complexity}. Since the discovery of this technique, learning with the PAC-semantics has been extended to certain
fragments of first-order logic   \citep{bellejuba2019}. Given the promise of this technique, but also taking into consideration the hardness of reasoning in epistemic logic (PSPACE-complete when there is more than one agent \citep{reasoning:about:knowledge}), we continue this line of work for the problem of implicitly learning with epistemic logic. 

The extension to epistemic logics raises numerous challenges not previously considered by any other work on the PAC-semantics. In the first place, we must describe the learning process in a multi-agent epistemic framework, where previously the PAC-semantics had only been considered as an extension of Tarskian semantics. In addition, implicit learning generally relies on three steps: first, to argue that the way of accepting the observations with background theory and accepting a high number of them is correct as per PAC-semantics. Secondly, to measure the sample complexity, that is, how many observations are required to provably assert that the query is entailed given a user-specified error bound. Finally, we want to look at under what circumstances could this algorithm achieve a polynomial run time. On the last point, given that reasoning in epistemic logics, especially in multi-agent epistemic logics is PSPACE-complete \citep{halpern1997critical, Bacchus1999171, 174658, halpern2003uncertainty} it might seem like there is no hope for this problem. In this article, we provide in fact, concrete results about sample complexity and correctness, as well as polynomial time guarantees under certain assumptions. Our learning task is similar to an unsupervised learning model, however, our end task is deciding query entailment with respect to background knowledge and partial interpretations.

In this work, we show how to extend the implicit learning approach to epistemic modal formulas, yielding agnostic (implicit) learning of epistemic formulas for the purposes of deciding entailment queries. We leverage some recent results on the so-called \textit{Representation Theorem} explored for single-agent and multi-agent epistemic logics \citep{Levesque2001logic, bellejair, Schwering19}. In these results, in addition to the standard operator for knowledge, a modal operator for \textit{only knowing} is introduced \citep{LEVESQUE1990alliknow}, which provides a means to succinctly characterise all the beliefs as well as the non-beliefs of the agent. For example, only knowing proposition $p$,  denoted as $O(p)$, entails knowing $K(p)$: $O(p) \vDash K(p)$, however, only knowing $p$ does not entail another proposition $q$: $O(p) \nvDash q $, for all $p \neq q$. Thus, this is quite attractive to capture everything that is known. It can be shown that to check the validity of $O (\phi) \rightarrow K (\alpha)$ when $\phi$ is objective and $\alpha$ can mention any number of $K_i$ modalities in the presence of negation, conjunction, and disjunction, can be reduced to propositional reasoning. Although propositional reasoning is already NP-complete, it is known that there are a number of approaches for tractability including bounded space treelike resolution \citep{esteban2001} and bounded-width resolution \citep{galil1977}. In the multi-agent setting, the natural extension of this to the Representation Theorem is allowing for a knowledge base of this sort $O_A(\phi \wedge O_B (\psi \wedge ...) \wedge O_C(...))$, which specifies everything that the root agent, say $A$, knows as well as everything that the root agent believes agent $B$ knows and $C$ knows and so on. This admittedly can seem like a strong setting but recent results have shown how this can be relaxed \citep{Schwering19}. The other way to motivate this approach is that initially, perhaps nothing is known or all start with common beliefs, and then new knowledge can be acquired as actions happen. For example, in a paper by Belle and Lakemeyer \citep{Belle:2015ac}, it is shown how the setting provides a natural way to capture the muddy children puzzle \citep{reasoning:about:knowledge}. Under the assumption that you have one of these background theories, and we are interested in the entailment  of $K_A \alpha$, where $\alpha$ can mention any number of $K_i$ operators for any $i$ and arbitrarily nesting, the Representation Theorem establishes that this reduces to propositional reasoning. However, even though this holds, the key concern is because we have specified what is only known, we need a way to incorporate observations to formalize learning  from such observations. We focus on the case where an agent in the system wishes to use learning to update its knowledge base. We allow for a new modality, an observational modality $[\rho]$ that we borrow from multi-agent dynamic logics with a regression operator \citep{bellejair}, and show that this provides a logically correct approach for incorporating observations and thereby checking the entailment of the query. Beyond the novelty of our technical results, we also note that there are very few approaches for knowledge acquisition with epistemic logics despite them being one of the most popular modal logics in the knowledge representation community. That is, with this paper, we are making advances both in machine learning as well as the knowledge representation literature.

\section{Preliminaries}

We define the reasoning problem as follows: in a system of multiple agents, each agent has some background knowledge encoded in a knowledge base and receives information about the environment through the sensors, which are encoded as partial observations. We then ask the root agent queries about the environment. After receiving the partial observations, the agent returns with some degree of validity the answer for the specified query. In epistemic reasoning, we distinguish between what is true in the real world and what the agents know or believe about the world. For example, the beliefs of agent $A$ about the world may differ from agent $B$’s knowledge, and what agent $A$ believes $B$ to know may differ from what $B$ actually believes. In the context of multi-agent reasoning, we are interested in deciding the entailment of an input query about the other agents with respect to a background theory which contains the beliefs of agents in the application domain.

\noindent\textbf{Syntax.}
Let $\mathcal{L}_n$ be a propositional  language which consists of formulas from the finite set $P$ of propositions 
and connectives $ \wedge, \vee, \neg, \rightarrow$. Let $\mathcal{OL}_n$ be the epistemic language with additional modal operators. First,  $K_i$:  $K_i \alpha$ is to be read as ``agent $i$ knows $\alpha$'', where $i$ ranges over the finite set of agents $Ag = \{ A, B\}$, which, for simplicity, assumes two agents, although this can be extended to many more agents. Second, $O_i \alpha$ is to be read as \textit{``all that agent i knows is $\alpha$''} to express that a formula is \textit{all} that is known. The \textit{only knowing} operator is instrumental in capturing the beliefs as well as the non-beliefs of agents \citep{bellejair, LEVESQUE1990alliknow}.

Somewhat unusually, as discussed above, borrowing from the dynamic version of $\mathcal{OL}_n$  \citep{bellejair}, we introduce a dynamic operator $[\rho]$ such that $[\rho] \alpha$ is understood as formula $\alpha$ is \textit{true} after receiving the observation $\rho$. In particular, assume a finite set of observations $OBS$ with elements consisting of conjunctions over the set $P$, e.g., $OBS = \{ p, p \land q, \ldots, (p \land \neg q) \land r \}$. The elements of $OBS$ are used strictly within the dynamic operator $[\rho]$. In order to interpret the action symbol, we will introduce a sensing function, one corresponding to each agent $obs_i$, that takes as argument the action symbol and returns either the observation it corresponds to or simply returns $true$. A well-defined formula is then of the form $[\rho] \alpha$ where $\alpha$ is either propositional or at most mentions knowledge modalities $K_i$. For example $obs_A(p \wedge q) = p \wedge q$.
It is not necessary that $obs_A(p \wedge q) = obs_B(p \wedge q)$. For instance, we may also have $obs_B(p\land q) = true$ and $obs_A(p \land q)  = p$. In other words, the agents may obtain different observations from the same observational action. Suppose agent $B$ looks at a card, he will be able to read what is written on the card whereas every other agent now knows that $B$ has read the card but not what the card says. This is a simplified account from previous work by Belle and Lakemeyer \citep{bellejair}, mainly because we need to deal with a single observation for the purposes of this paper. It will be straightforward to extend it to a sequence of observations, however. Moreover, we will be appealing to \textit{Regression} over sensing actions from that work in our approach.

\noindent\textbf{Semantics.} The semantics is provided in terms of possible worlds and \textit{k-structures} \citep{bellejair}. We distinguish the mental state of an agent from the real world and make use of epistemic states to model different mental possibilities. The standard literature uses the \textit{Kripke} \textit{structure} to model multi-agent epistemic states \citep{reasoning:about:knowledge}.  For this work, we use \textit{k-structures} \citep{bellejair} instead, which deviates from the Kripke structure in the way the epistemic state is defined.
The $k$-structure uses sets of worlds at different levels, the idea being that the number of levels corresponds to the number of alternating modalities in the formula. The $i$-depth is defined as follows:

\begin{DEFINITION} [$i$-depth \citep{bellejair}] The $i$-depth of $\alpha \in \mathcal{O L}_n$ where $i$ is the agent's index, denoted $|\alpha| _i $, is defined inductively as:

\begin{itemize}
    \item[] \hspace{0.5cm} $\bullet$  $|\alpha|_i = 1$ for propositions;
    \item[] \hspace{0.5cm} $\bullet$  $|\neg \alpha |_i= |\alpha|_i$;
    \item[] \hspace{0.5cm} $\bullet$  $|\alpha \vee \beta| _i= max (|\alpha|_i, |\beta|_i)$;
    \item[] \hspace{0.5cm} $\bullet$ $|[\rho] \alpha|_i = |\alpha|_i$, where $\rho \in OBS$;
    \item[] \hspace{0.5cm} $\bullet$  $|M_i \alpha| _i = |\alpha|_i, M_i \in \{ K_i, O_i\}$, and
    \item[] \hspace{0.5cm} $\bullet$  $|M_j \alpha |_i = |\alpha|_j +1, M_j \in \{K_j, O_j \}$ and $i \neq j$.

\end{itemize}

\end{DEFINITION}

Note that the dynamic modality has no impact on the $i$-depth of the formula, as it does not refer to the agent's knowledge. The reason for choosing  $k$-structures instead of the classical Kripke is because it provides a very simple semantics for only knowing in the multi-agent case \citep{DBLP:conf/dagstuhl/Belle10}. Beliefs are reasoned about in terms of valid sentences of the form: $O_A(\Sigma) \vDash K_A \alpha$, read as ``if $\Sigma$ is all that the agent $A$ believes, then the agent knows $\alpha$''. In the interest of simplicity, we focus for the rest of the paper on only two agents $A$ and $B$, where moreover $A$ is the root agent, in the sense that we will be interested in what $A$ knows and what $A$ observes, and the queries will be posed regarding $A$'s knowledge. The $i$-depth of a formula  $\alpha$ is agent dependent, so it can have $A$-depth $k$ and $B$-depth $j$ in terms of the nestings of modalities. A formula is $i$-objective if its $i$-depth is $0$ and objective if both its $A$- and $B$-depths are $0$. The $k$-structure might be used with a subscript to denote the agent possessing that mental state and a superscript to represent the depth of the modal operators.

We denote the set of worlds by $\mathcal{W}$ and the set of $k$-structures for an agent $A$ by $e^k_A$. A world $w \in \mathcal{W}$ is a function from the set $P$ to $\{0,1\}$, i.e., a world stipulates which propositions are true, such that if $w[p] =1$ then $p$ is \textit{true} at the world, and \textit{false} otherwise. A $k$-structure models an agent's knowledge using the possible worlds approach: an agent $A$ say, knows the statements that are true in all the worlds they consider possible. To account for what agent $A$ knows about $B$'s knowledge, every possible world of $A$ is additionally associated with a set of worlds that $A$ knows $B$ to consider possible.
\begin{DEFINITION}
[$k$-structure \citep{bellejair}] A $k$-structure $e^k$, where $k\geq 1$ is defined inductively as: 

 \item[] \hspace{0.5cm} $\bullet$ $e^1 \subseteq \mathcal{W} \times \{\{\}\}$; and  

 \item[] \hspace{0.5cm} $\bullet$ $e^k \subseteq \mathcal{W}$ $\times E^{k-1}$, where $E^k$ is the set of all $k$-structures.

\end{DEFINITION}

Therefore, with two agents $\{A, B \}$, a $(k,j)$-model is a triple $(e^k_A, e^j_B, w)$, where $e^k_A$ is a $k$-structure for $A$, $e^j_B$ is a $j$-structure for $B$ and $w$ is a world. 
Before introducing satisfaction, we need to talk about the compatibility of worlds after an observation, and this is agent-specific\footnote{Our theory of knowledge is based on knowledge expansion where sensing ensures that the agent is more certain about the world \citep{scherllevesque2003}.}. For any two worlds $w, w'$ and observation $\rho \in OBS \cup \{\langle\rangle\}$, let us define $w\sim ^i_\rho w'$ iff $w[obs_i(\rho)] = w'[obs_i(\rho)] = 1$. That is, agent $i$ considers $w$ and $w'$ to be compatible iff $i$'s sensory data for the observation $\rho$ is the same in both. When $\rho=\langle\rangle$, then $w\sim ^i_{\rho} w'$ holds, that is, all worlds are compatible if no sensing has happened. Essentially, either $\rho$ is an action or the empty sequence where no observation has taken place.  Now, we define satisfaction following the work of Belle and Lakemeyer \citep{bellejair} but modified to account for the adaptations we have introduced above:

\begin{DEFINITION}
[Satisfaction]  For any  $z\in OBS \cup \{\langle\rangle\}$, we determine whether a formula is \textit{true} or \textit{false} after receiving the observation $z$, written as $(e^k_A, e^j_B, w, z) \models \alpha$ and defined as follows:

\begin{itemize}
        \item[] \hspace{0.5cm} $\bullet$ $e^k_A, e^j_B, w,z \vDash p$ iff $w[p] = 1$, if $p$ is an atom;
        \item[] \hspace{0.5cm} $\bullet$ $e^k_A, e^j_B, w,z \vDash \neg \alpha$ iff $e^k_A, e^j_B, w,z \nvDash  \alpha$;
        \item[] \hspace{0.5cm} $\bullet$  $e^k_A, e^j_B, w,z \vDash \alpha \vee \beta$  iff $e^k_A, e^j_B, w,z \vDash \alpha$ or  $e^k_A, e^j_B, w,z \vDash \beta$;
        \item[] \hspace{0.5cm} $\bullet$  $e^k_A, e^j_B, w,z \vDash [\rho] \alpha$ iff $z=\langle\rangle$ and  $e^k_A, e^j_B, w, \rho \vDash  \alpha$; 
        \item[] \hspace{0.5cm} $\bullet$  $e^k_A, e^j_B, w,z \vDash K_A \alpha$ iff for all $w'  \in \mathcal{W}$, $w' \sim^A_z w$ and for all $e^{k-1}_B \in E^{k-1}$, if  $(w', e^{k-1}_B) \in e^k_A$ then $e^k_A, e^{k-1}_B, w', \langle\rangle  \vDash \alpha$;  
        \item[] \hspace{0.5cm} $\bullet$   $e^k_A, e^j_B, w,z \vDash O_A \alpha$ iff for all $w' \in \mathcal{W}$, $w' \sim^A_z w$ and for all $e^{k-1}_B \in E^{k-1}$,  $(w', e^{k-1}_B) \in e^k_A$ iff $e^k_A, e^{k-1}_B, w',\langle\rangle  \vDash \alpha$;

            \item[] \hspace{0.5cm} $\bullet$  $e^k_A, e^j_B, w,z \vDash K_B \alpha$ iff for all $w'  \in \mathcal{W}$, $w' \sim^B_z w$ and for all $e^{j-1}_A \in E^{j-1}$, if  $(w', e^{j-1}_A) \in e^j_B$ then $e^k_A, e^{j-1}_B, w', \langle\rangle  \vDash \alpha$;  
        \item[] \hspace{0.5cm} $\bullet$   $e^k_A, e^j_B, w,z \vDash O_B \alpha$ iff for all $w' \in \mathcal{W}$, $w' \sim^B_z w$ and for all $e^{j-1}_A \in E^{j-1}$,  $(w', e^{j-1}_A) \in e^j_B$ iff $e^k_A, e^{j-1}_B, w',\langle\rangle  \vDash \alpha$;
       
\end{itemize}
\end{DEFINITION}

So the main difference between only-knowing and knowing is the ``iff'' rather than ``if'' which forces every pair of world and $(k-1)$-structure where $\alpha$ is satisfied to be included, and only these to be included in $e^k_A.$ Note also that on evaluating the epistemic operators, when $z = \langle\rangle$, the compatibility can be fully ignored. But when $z\in OBS$, we then look at compatible worlds and evaluate $\alpha$ in the corresponding models against the empty sequence.\footnote{For the case where $k=1$, the recursive level for agent $B$ will be of depth $0$, which corresponds to classical entailment.} We say $\alpha$ is satisfiable if there is a model of appropriate depth, and valid if it is true in every such model. Note that it is the property of the logic that if $\alpha$ is of depth $k$, and it is valid in all $k$-structures, it is also valid in all $k+n$-structures for $n\in \{0,1, \ldots\}$. So for all intents and purposes, we can stop caring about the depth of formulas as any class of semantic structures of the corresponding or higher depth suffice for reasoning. We often write $e_A^k, e_B^j, w\models \alpha$ to mean $e_A^k, e_B^j, w, \langle\rangle\models \alpha$.

\section{Sensing}
We model the agent receiving information about the world through observations. The observations received are represented as $[\rho]$, where $\rho$ is an action standing for a propositional conjunction drawn from $OBS$, interpreted, say, as reading from a sensor.  We now need to discuss how information from the sensors can be incorporated into the knowledge of the agent in a formal way. Recall that since we start with agents only-knowing formulas, we cannot simply conjoin new knowledge. That is why we leverage an insight from the dynamic multi-agent only knowing framework: what the agent knows after sensing is contingent on what was known previously in addition to observing some truth about the real world \citep{bellejair}. In other words, the following is a theorem in the logic: 

\begin{THEOREM} [Sensing \citep{bellejair}, Th.19] \label{theorem_sensing}


Given objective formulas $\Sigma, \Sigma',$ and $\Gamma$; a formula $\alpha$ that is either propositional or at most mentions $K_i$ operators; and an observation $\rho$, then:

$  
\Gamma \land O_A(\Sigma \wedge O_B \Sigma' ) \vDash [\rho] K_A \alpha 
$ iff $\Gamma \land O_A(\Sigma \wedge O_B \Sigma' ) \vDash obs_A(\rho) \wedge K_A(obs_A (\rho) \rightarrow [\rho]\alpha)$. 
\end{THEOREM}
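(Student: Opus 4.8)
The plan is to argue semantically, model by model, over the models of the antecedent $\Phi := \Gamma \wedge O_A(\Sigma \wedge O_B\Sigma')$. I would fix an arbitrary $(k,j)$-model $(e^k_A, e^j_B, w)$ that satisfies $\Phi$ against the empty observation sequence, and show that $[\rho]K_A\alpha$ and $obs_A(\rho) \wedge K_A(obs_A(\rho)\rightarrow[\rho]\alpha)$ receive the same truth value there; both directions of the claimed entailment equivalence then follow immediately, since equivalence of the two consequents in every model of $\Phi$ implies that $\Phi$ entails one iff it entails the other. The first step is to unfold the left-hand side using the satisfaction clause for $[\rho]$: since the current observation index is $\langle\rangle$, we have $(e^k_A,e^j_B,w,\langle\rangle)\models[\rho]K_A\alpha$ iff $(e^k_A,e^j_B,w,\rho)\models K_A\alpha$. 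Unfolding the $K_A$ clause at index $\rho$ then quantifies over exactly the pairs $(w',e^{k-1}_B)\in e^k_A$ with $w'\sim^A_\rho w$, i.e.\ those for which $w[obs_A(\rho)]=w'[obs_A(\rho)]=1$.

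The key observation is that this compatibility condition factors into two parts corresponding precisely to the two conjuncts on the right. The requirement $w[obs_A(\rho)]=1$ is nothing but the veridicality conjunct $obs_A(\rho)$, evaluated propositionally at the real world. The residual requirement $w'[obs_A(\rho)]=1$ restricts the $K_A$-quantification to worlds where $obs_A(\rho)$ holds, which is exactly what the material implication $obs_A(\rho)\rightarrow[\rho]\alpha$ achieves when placed inside a $K_A$ evaluated at the empty index: unfolding $K_A(obs_A(\rho)\rightarrow[\rho]\alpha)$ at $\langle\rangle$ ranges over all $(w',e^{k-1}_B)\in e^k_A$ and demands $[\rho]\alpha$ only at those with $w'[obs_A(\rho)]=1$. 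So the plan is to line up the two world-quantifications and then reconcile the bodies: the left side asks for $\alpha$ under observation $\rho$, while the right side asks for $[\rho]\alpha$ under the empty index, and the $[\rho]$ clause collapses the latter to the former precisely because the $K_A$ clause resets the index to $\langle\rangle$ at each quantified world. Here I would use that $\alpha$ mentions at most $K_i$ operators (and no further dynamic or $O_i$ operators), so that this single regression step is well defined and generates no additional observation bookkeeping.

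Throughout, the role of the only-knowing antecedent $O_A(\Sigma\wedge O_B\Sigma')$ with objective $\Sigma,\Sigma'$ is to pin down $e^k_A$ to the same set of pairs on both sides, so that the two quantifications genuinely range over identical structures and the reasoning about which worlds satisfy $obs_A(\rho)$ becomes propositional. I expect the main obstacle to be the careful handling of the boundary between the veridical and non-veridical cases --- reconciling the behaviour of $[\rho]K_A\alpha$ when $w\not\models obs_A(\rho)$ with the veridicality conjunct on the right --- together with verifying that the epistemic states referenced inside the nested modalities of $\alpha$ coincide on the two sides. This is where the objectivity assumptions on $\Sigma,\Sigma',\Gamma$ and the determinism of the $obs_A$ function do the real work, and where I would concentrate most of the effort.
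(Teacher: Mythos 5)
Your overall strategy---unfold the satisfaction clauses for $[\rho]$ and $K_A$, and factor the compatibility relation $w'\sim^A_\rho w$ into a veridicality condition on the real world plus a restriction of the epistemic quantification that is re-expressed as a material conditional under $K_A$---is exactly the idea behind the paper's own (very brief) justification, which simply observes that sensing restricts attention to worlds agreeing with the observation and then defers to Belle and Lakemeyer's Theorem~19 for the details. So in spirit you are on the paper's route.

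However, there is a genuine gap in the way you set the argument up. You propose to prove that the two consequents receive the \emph{same truth value in every model} of $\Phi = \Gamma\land O_A(\Sigma\land O_B\Sigma')$ and then conclude the entailment equivalence ``immediately.'' That pointwise equivalence is false under the semantics as given. The relation is defined by $w\sim^A_\rho w'$ iff $w[obs_A(\rho)] = w'[obs_A(\rho)] = 1$, so in any model of $\Phi$ whose real world $w$ falsifies $obs_A(\rho)$, \emph{no} world is $\sim^A_\rho$-compatible with $w$; the $K_A$ quantification at index $\rho$ is then vacuous and $[\rho]K_A\alpha$ comes out true, while the conjunct $obs_A(\rho)$ on the right comes out false. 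Hence the left-to-right direction cannot be obtained model by model: it needs the standing assumption, stated in the surrounding discussion (knowledge expansion, observations not conflicting with $\Gamma$), that forces $obs_A(\rho)$ to hold in every model of the antecedent. You flag this ``boundary between the veridical and non-veridical cases'' as an anticipated obstacle, but your plan as written does not survive it---the resolution is not more care in the pointwise argument but an appeal to a side condition on $\Gamma$ and $\rho$ that must be made explicit. A secondary issue: after unfolding, the body on the left is $\alpha$ evaluated at the \emph{empty} index (the $K_A$ clause resets $z$ to $\langle\rangle$), whereas the body on the right is $[\rho]\alpha$ at the empty index, i.e.\ $\alpha$ at index $\rho$; your claim that the $[\rho]$ clause ``collapses the latter to the former'' has the direction of the collapse wrong, and for $\alpha$ containing nested $K_i$ operators the two bodies do not coincide after a single step---this is precisely why the theorem keeps $[\rho]\alpha$ on the right and why the regression must be applied recursively until $\alpha$ is propositional.
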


The proof is based on the fact that when the sensing action happens, we only look at worlds that agree with the sensed observation (in accordance with the semantics for the dynamic operator). Therefore not only must the sensed observation hold in the real world, but also knowing this observation must mean that $\alpha$ is known (assuming it was not known already). Note two points: first,  the observations are assumed to not conflict with what was already known or with $\Gamma$, which represents the real world; that is, as discussed before, we are operating under the setting of knowledge expansion and not belief revision. The agent may be ignorant about something and the sensing adds more knowledge to the agent, but it is not possible for the agent to know something which is then contradicted by an observation. This is the standard setting in the epistemic situation calculus \citep{scherllevesque2003}. Second, the sensing theorem works very much like the successor state axiom for knowledge in the epistemic situation calculus \citep{scherllevesque2003}; however, in the latter, it is a stipulation of the background theory, whereas  here it is a theorem of the logic. The sensing theorem establishes that  $[\rho]K_A\alpha \equiv o \land K_A(o \rightarrow[\rho]\alpha)$, where $o = obs_A(\rho)$, and it is the RHS that we will make use of in our learning theorem. Note, however, that in the RHS, the dynamic modality is now being applied to $\alpha.$ At this point, the sensing theorem applies recursively and stops when it is in the context of a propositional formula. That is, $[\rho]\alpha = \alpha$ if $\alpha$ is propositional; and $[\rho]\alpha = o' \land K_i(o'\rightarrow [\rho]\beta)$  if $\alpha  = K_i \beta$, where $o' = obs_i (\rho) $.

This is the essence of the Regression Theorem \citep{bellejair}, where the application of an observational action in the context of a propositional formula yields the formula itself because sensing does not affect truth in the real world. Only when it encounters an epistemic operator, it uses the RHS of the sensing theorem. In what follows, for improving readability, we abuse notation and sometimes use $\rho$ outside the dynamic operator to mean the corresponding observation w.r.t.\ the root agent. That is, we write a formula such as $\rho \land \alpha$ to mean $obs_A(\rho) \land \alpha$. Likewise, we write $K_A(\rho \supset [\rho]\alpha)$ to  mean $K_A(obs_A(\rho) \supset [\rho]\alpha)$, that is, inside the dynamic operator $\rho$ is left as is but everywhere else it is being replaced by the observational formula that it corresponds to.

\section{Reasoning}
The language in general allows for arbitrary nesting of epistemic operators. And as already mentioned, at least for formulas not mentioning the dynamic operator and only knowing, $k$-structures can be shown to be semantically equivalent to the {Kripke} structures with respect to the entailment of a formula. What we are interested in now is finding a connection between validity and what we require in the learning algorithm. To do this we need to resolve two issues: first, how can observations be incorporated into the background knowledge, and second, how can the entailment of the query with respect to the background knowledge as well as the observations be evaluated? 

It is important to appreciate that the second challenge deserves great attention because we are dealing with noisy observations. Roughly speaking, the way implicit learning works \citep{juba2013} given a set of noisy observations is that the conjunction of the background knowledge together with the observation is used to check if the query formula logically follows. Suppose this happens for a high proportion of the observations. In that case, the query is accepted by the decision procedure which can be seen as implicitly including whatever formula might be captured by the high proportion of observations. So checking logical validity will be an important computational component of the overall algorithm. We will only obtain a polynomial time learning algorithm if checking validity is in polynomial time. It is widely known that reasoning in the \textit{weak-S5} is PSPACE-complete \citep{HansHalpern2015}. So what hope do we have? Not surprisingly, with the only-knowing operator, the reasoning is much harder, at the second level of the polynomial hierarchy even for a single agent \citep{rosati2000}. However, as it turns out, there is a very popular and interesting result: if one is interested only in the validity of the formulas $O(\Sigma) \rightarrow K \alpha$, this can be reduced to propositional reasoning. (However, this, in fact, is \textit{co-NP hard} \citep{Levesque2001logic}, but that's another matter because much is known about bounded proofs in propositional logic \citep{juba2012, juba2013}. Likewise, when we consider multiagent knowledge bases of the form 
$O_A(\phi \wedge O_B(\psi \wedge...) ...)$, where $\phi$ and $\psi$ are objective formulas, and we are interested in the entailment of $K_A \alpha$, where $\alpha$ does not mention dynamic ``$[\cdot ]$'' nor ``$O_i$'' operators, we can reduce it to propositional reasoning \citep{bellejair}. The reduction to propositional reasoning is achieved using the \textit{Representation Theorem} denoted by the operator $|| \cdot ||$, first introduced by Levesque and Lakemeyer \citep{Levesque2001logic}. It works by going through a formula and replacing knowledge of an objective sentence by either \textit{true/false} according to whether the sentence is entailed by the given knowledge base $\Sigma$. This idea is then generalized to non-objective knowledge by working recursively on formulas from the inside out.

\begin{DEFINITION}
    [Representation Theorem \citep{bellejair}, simplified from Def.25]  Let $\phi$ and $\psi$ denote the set of sentences only known by agent $A$ and $B$ respectively. Then for any epistemic formula $\alpha$, $|| \alpha||_{\phi, \psi} $ is defined as follows:

    \begin{itemize}
     \item[]   \hspace{0.5cm}  1. $|| \alpha||_{\phi, \psi} = \alpha$, if $\alpha$ is objective;
        \item[] \hspace{0.5cm}  2. $|| \neg \alpha||_{\phi, \psi}  = \neg || \alpha||_{\phi, \psi} $;
        \item[] \hspace{0.5cm}  3. $|| \alpha \vee \beta ||_{\phi, \psi}  = ||\alpha||_{\phi, \psi} \vee ||\beta|| _{\phi, \psi} $;
        \item[] \hspace{0.5cm}  4. $|| K_A \alpha || _{\phi, \psi}  = \phi \rightarrow ||\alpha||_{\phi,\psi}$; 
        \item[] \hspace{0.5cm}  5. $|| K_B \alpha || _{\phi, \psi}  = \psi\rightarrow ||\alpha||_{\phi,\psi}$. 
    \end{itemize}

\end{DEFINITION}

Putting it together, suppose that $\phi, \psi$ are objective formulas and $\alpha$ is an epistemic formula that does not mention $\{ O_i, [\cdot]\}$ operators. Then $O_A(\phi \wedge O_B \psi) \vDash K_A \alpha$ if $\vDash ||\alpha||_{\phi, \psi}$, where $||\alpha||_{\phi, \psi}$ is a propositional formula. The reduction works by slicing up the knowledge base and query at the modal operator and transforming these formulas into objective formulas. 

Originally, the representation theorem used a second operator $RES[\cdot,\cdot]$ \citep{LEVESQUE1990alliknow} in Cases 4 and 5, but for propositional languages, this operator simplifies to checking entailment w.r.t.\ the indicated agent's knowledge base.

\begin{EXAMPLE}
  Suppose we have a query  $\alpha$ of the form $\alpha = K_A K_B p$ and $\phi, \psi$ are sets of sentences believed by the agent $A$ and $B$ respectively, i.e. $O_A(\phi \wedge O_B(\psi))$. Since $\alpha$ contains modal operators, we apply Cases 4 and 5 recursively: first, since the query has $K_A$ at the outermost position, we refer to the knowledge base believed by $A$ in Case 4, obtaining $||K_A K_B p || _{\phi, \psi}=\phi\rightarrow ||K_B p|| _{\phi, \psi}$. Then, we recursively apply Case 5 to obtain $||K_B p|| _{\phi, \psi}=\psi\rightarrow ||p||_{\phi, \psi}$.

  And then  because $p$ is an atom, which is objective, $||p||_{\phi, \psi}= p$,  so $||K_A K_B p || _{\phi, \psi}=\phi\rightarrow ||K_B p|| _{\phi, \psi}=\phi\rightarrow (\psi\rightarrow ||p||_{\phi, \psi})$

which finally results in checking the validity of $\phi \rightarrow (\psi \rightarrow p)$, or equivalently $(\phi \wedge \psi) \rightarrow p$.
  
\end{EXAMPLE}

\begin{COROLLARY}
    Suppose $\phi, \psi$ are sets of sentences believed by the agent $A$ and $B$ respectively  and $\alpha$ is an epistemic formula from $\mathcal{OL}_n$ that does not mention the $\{O_i, [\cdot] \}$ operators. Suppose $\rho$ is a propositional formula. Then, $O_A(\phi \wedge O_B \psi) \wedge \rho \vDash \alpha$ iff $\rho \rightarrow || \alpha || _{\phi, \psi}$, where $|| \cdot ||$ is as above.
\end{COROLLARY}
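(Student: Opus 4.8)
The plan is to derive the corollary from the Representation Theorem in two moves: first absorb the objective premise $\rho$ into the query by means of a deduction theorem, and then exploit the fact that $||\cdot||_{\phi,\psi}$ fixes objective formulas (Case~1) and acts homomorphically on the Boolean connectives (Cases~2 and~3).

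First I would establish a model-level deduction theorem for the $k$-structure semantics. Fixing a model $M=(e^k_A,e^j_B,w)$ evaluated against the empty observation $\langle\rangle$, and reading $\rho\to\alpha$ as $\neg\rho\vee\alpha$, the satisfaction clauses for $\neg$ and $\vee$ give that $M\vDash\rho\to\alpha$ iff ($M\vDash\rho$ implies $M\vDash\alpha$). Quantifying over all models of the appropriate depth then yields
$$O_A(\phi\wedge O_B\psi)\wedge\rho\vDash\alpha \quad\text{iff}\quad O_A(\phi\wedge O_B\psi)\vDash\rho\to\alpha.$$

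Next I would apply the Representation Theorem to the query $\rho\to\alpha$. Because $\alpha$ mentions neither $O_i$ nor $[\cdot]$ and $\rho$ is propositional, the compound $\rho\to\alpha$ lies in the same fragment, so the theorem gives $O_A(\phi\wedge O_B\psi)\vDash\rho\to\alpha$ iff $\vDash||\rho\to\alpha||_{\phi,\psi}$. It then remains only to compute the translation: Case~1 gives $||\rho||_{\phi,\psi}=\rho$ since $\rho$ is objective, and the homomorphic Cases~2 and~3 give $||\rho\to\alpha||_{\phi,\psi}=\neg\rho\vee||\alpha||_{\phi,\psi}=\rho\to||\alpha||_{\phi,\psi}$. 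Chaining the three equivalences delivers exactly the claimed biconditional.

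I expect the main obstacle to be justifying that the Representation Theorem may be invoked on the compound query $\rho\to\alpha$ rather than on a query of the pure form $K_A(\cdot)$ for which it is most directly stated: one must check that the reduction stays correct when free objective literals coming from $\rho$ sit alongside the knowledge subformulas of $\alpha$, and that $\rho$ is handled consistently in its two roles—constraining the real world in the deduction step and being passed through unchanged by Case~1 of the translation. The soundness of the deduction step in the $k$-structure semantics is the remaining delicate point, but it should follow directly from the Boolean clauses once one observes that $\rho$, being objective, is evaluated only at the real world under the empty observation and therefore commutes past the modal reduction.
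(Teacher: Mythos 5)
Your overall strategy --- absorb $\rho$ via a semantic deduction theorem, invoke the Representation Theorem on the resulting query $\rho\to\alpha$, and push $||\cdot||_{\phi,\psi}$ through the Boolean connectives --- is the natural reading of how the corollary is meant to follow (the paper itself gives no explicit proof), and your first and third steps are sound: the deduction-theorem equivalence is immediate from the clauses for $\neg$ and $\vee$, and Cases 1--3 do give $||\rho\to\alpha||_{\phi,\psi}=\rho\to||\alpha||_{\phi,\psi}$.

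The gap is precisely the point you flag and then wave off. The Representation Theorem, as stated in the paper (``Putting it together''), applies to queries of the pure form $K_A\alpha$, and it does \emph{not} extend to arbitrary Boolean combinations of objective and modal subformulas under the simplified Cases 4--5. The reason is that $||K_A\beta||_{\phi,\psi}=\phi\to||\beta||_{\phi,\psi}$ is a world-\emph{dependent} propositional formula, whereas the truth of $K_A\beta$ at a model of $O_A(\phi\wedge O_B\psi)$ does not depend on the real world $w$ (under the empty observation every $w'$ is compatible with $w$). Concretely, take $\phi=p$, $\rho=q$ and $\alpha=K_A q$: then $||\rho\to\alpha||_{\phi,\psi}=q\to(p\to q)$ is a tautology, yet $O_A(p\wedge O_B\psi)\not\models q\to K_A q$, since there are models whose real world satisfies $q$ while $A$'s epistemic state, being the maximal set of $p$-worlds, contains $\neg q$-worlds. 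So the middle equivalence in your chain fails, and $\rho$ does not ``commute past the modal reduction'' as you hope. To close the gap you would need the original $RES[\cdot,\cdot]$-based translation that the paper alludes to, in which $||K_A\beta||$ becomes a truth \emph{constant} (true iff $\phi\models||\beta||_{\phi,\psi}$); for that translation one proves the world-by-world lemma that $(e^k_A,e^j_B,w)\models\alpha$ iff $w\models||\alpha||$ by induction on $\alpha$, after which your deduction-theorem argument goes through verbatim. Alternatively, restrict the statement to queries of the form $K_A(\cdot)$, which is all that Algorithm~\ref{algo:il} actually requires.
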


The following example  is adapted and modified from \citep{bellejair}: 

\begin{EXAMPLE}[Card game] Suppose two agents $A$ and $B$ are playing a card game with cards numbered from $1$ to $4$. The cards have been shuffled and two face-down cards are dealt, one to each agent. A player picks a card, reads the number on it and has to decide whether to challenge the other player or not. Once the other player responds by showing their card, the player with the highest number on the card wins.

We use the notation $N_A=\#1$ to represent that agent $A$ has drawn card number $1$ from the deck, and analogously for agent $B$ we use $N_B$. The initial conditions $\Sigma$ are represented by:
 \begin{itemize}
     \item[] \hspace{0.5cm} 1. $((N_{A} = \#1) \vee (N_{A} = \#2) \vee (N_{A} = \#3) \vee (N_{A} = \#4)) \wedge ((N_{B }= \#1) \vee (N_{B }= \#2) \vee (N_{B}= \#3) \vee (N_{B} = \#4))$: each agent draws only one card from the deck;
    \item[] \hspace{0.5cm} 2. $(N_{A} = \#4) \equiv  (N_{B} = \#1) \vee (N_{B} = \#2) \vee (N_{B} = \#3)$: the card agent $A$ draws must be distinct from agent $B$'s card (and likewise for other combinations);
    \item[] \hspace{0.5cm} 3. $W_A \equiv (N_A = \#2 \wedge N_B = \#1) \vee (N_A = \#3 \wedge N_B = \#1) \vee (N_A = \#4 \wedge N_B = \#1) \vee (N_A = \#3 \wedge N_B = \#2) \vee (N_A = \#4 \wedge N_B = \#2) \vee (N_A = \#4 \wedge N_B = \#3)$;
    Agent $A$ wins the round if he draws a card with a higher number than agent $B$, and analogously when agent $B$ is winning;
     \item[] \hspace{0.5cm} 4. $(\neg W_A \wedge \neg W_B)$: initially, no agent has won the game; and 
     
     \item[] \hspace{0.5cm} 5. finally, we also need to introduce the observational actions. For every card picking action $N_i = \#n$, we assume an action $\rho_{in}$.  Clearly, such an action should tell agent $i$ that his card is $n$, but should not reveal anything else to the other agent. (Interestingly, by the above formulas, the agent $i$ should be able to infer that the other agent has any card but $n$; we will come to this later.) So, we define $obs_i(\rho_ {in}) = (N_i = \#n)$ but $obs_j(\rho_{in}) = true$ for $j\neq i.$ By extension, define the action $\rho_{in,jk}$ to mean $i$ has read the value $n$ and $j$ the value $k$, and thus,  $obs_i(\rho_{in,jk}) = n$ and $obs_j(\rho_{in,jk}) = k.$
     
 \end{itemize}

Both agents $A$ and $B$ have the same initial knowledge about the game, encoded by a formula $\phi$ which also includes the sensing rule described above.
Let's say that agent $A$ draws card $\#4$ and agent $B$ draws card $\#3$. These observations are represented by $(N_{A} = \#4) \wedge (N_{B}=\#3)$. Since we are making the case for the root agent to be $A$ we have the initial theory as $\theta = O_A(\phi \wedge O_B(\phi)) \wedge [(N_{A} = \#4) \wedge (N_{B}=\#3)]$. We can then reason about beliefs and non-beliefs. The following properties follow:
 
\begin{enumerate}
\item[] \hspace{0.5cm} 1. $\theta \vDash \neg K_A (N_A = \#1) \wedge \neg K_A (N_A = \#2) \wedge \neg K_A (N_A = \#3) \wedge \neg K_A (N_A = \#4)$;
     
Initially, agent $A$ does not know the card he has.
\item[] \hspace{0.5cm} 2. $\theta \vDash [\rho_{A4}] K_A (N_{A}=\#4)$; After A picks up  his card, and then sensing the card, agent $A$ knows that the number is 4.

\item[] \hspace{0.5cm} 3. $\theta \vDash [\rho_{A4}] K_A \neg (N_{B}=\#4)$; The logical reasoning allows $A$ to infer that $N_B$ is one of $\{\#1, \#2, \#3\}.$

\item[] \hspace{0.5cm} 4. $\theta \vDash [\rho_{A4}] K_A (\neg  K_B ((N_A = \#1))$; Agent $A$ knows that agent $B$ does not know the number on $A$'s card, by means of his knowledge of the sensing actions (likewise for the $N_A = \# 2, N_A = \#3, N_A = \#4$).

\item[] \hspace{0.5cm} 5. $\theta \vDash [\rho_{A4}] K_A K_B(K_A (N_A = \#1) \vee K_A(N_A \neq \#1))$; Agent $A$ knows that agent $B$ does not know the number of his card, but nonetheless knows that $A$ knows whether he has the card.

\item[] \hspace{0.5cm} 6. $\theta \vDash [\rho_{A4}] K_A W_A \land K_A \neg K_B W_A$; By logical reasoning, $A$ infers that he has won but knows that $B$  does not know this.

\item[] \hspace{0.5cm} 7. $\theta \vDash [\rho_{A4,B3}] K_A (K_B\neg (N_{B}=\#4))$;

After both agents see their  cards, $A$ knows that $B$ knows his card, which cannot be $N_{B} = \#4$ since $A$ has the card with the number 4. At this point in the game, if $B$ had obtained the card with value 1, only then would he know he has lost. But since he has the value 3, he does not know that he has lost, because, after all, $A$ could have the card with value 2. By extension, $A$ also does not know that $B$ does not know. This is because only if $B$'s card had the value 1, $B$ knows that he has lost. But in all other circumstances, $B$ could still imagine that he has the higher card.





 \end{enumerate}

\end{EXAMPLE}


\section{PAC-Semantics}
Various approaches have been proposed in the attempt to gain efficient and robust learning, including inductive logic programming \citep{Muggleton1994} and concept learning \citep{valiant84}.
Concept learning, also known as \textit{Probably Approximately Correct} (PAC) learning is a machine learning framework where the classifier receives a finite set of samples from distribution and must return a generalisation function within a class of possible functions. This approach aims to produce with high probability (P) a function that has a low generalisation error (AC). In the context of logic, Valiant \citep{Valiant2000} proposed the \textit{PAC Semantics}, a weaker semantics (compared to the classical entailment) for answering queries about the background knowledge, that integrates noisy observations against a logical background knowledge base. The knowledge is represented on one hand by a collection of axioms about the world, and on the other, by a set of examples drawn from an (unknown) distribution. In this way, the algorithm uses both forms of knowledge to answer a given query, which one may not be able to answer using only the background knowledge or the standalone examples. The output generated by this approach does not, however, capture the sense of validity in the standard (Tarskian) sense; rather, validity is defined as follows:

\begin{DEFINITION}
[$(1-\epsilon)$-validity] Suppose we have a distribution $\mathcal{D}$ supported on $E_A ^k\times E_B ^j \times \mathcal{W}$ and $\alpha$ an epistemic formula $\alpha \in \mathcal{OL}_n$, that does not mention $\{ O_i, [\cdot]\}$ operators. Then we say that $\alpha$ is $(1-\epsilon)$-valid iff $Pr_{(e^k_A, e^j_B, w) \in E_A ^k\times E_B^j \times \mathcal{W}} [(e^k_A, e^j_B, w) \vDash \alpha] \geq 1- \epsilon$, where $(e^k_A, e^j_B, w)$ is a model from $E_A ^k\times E_B ^j \times \mathcal{W}$. If $\epsilon  = 0$, we say that $\alpha$ is perfectly valid.
\end{DEFINITION}

It is worth pointing out that the overall thrust of the framework is quite different from popular approaches such as inductive logic programming \citep{Muggleton1994} and statistical relational learning \citep{getoor2007introduction,de2011statistical} etc. In inductive logic programming, an explicit hypothesis is learned that captures only the examples currently provided, and rarely is there an explicit account of how the hypothesis might reflect some unknown distribution from which the examples are drawn. In statistical relational learning, a hypothesis captures the distribution of the examples provided in terms of a logical formula or a probabilistic logical formula.

Once again, no accounts are provided about how this formula might capture the unknown distribution from which the examples are drawn. The key trick is to develop a decision procedure that checks the entailment of the query against the knowledge base by conjoining the observations to the background theory and checking the queries. Then, if for a higher proportion of observations, the query is indeed contained, we conclude that the explicit knowledge base together with the implicit knowledge base entails the query and all of this is robustly formalised using the \textit{PAC-semantics} \citep{Valiant2000}. The proportion of times the query formula evaluates to \textit{true} can be used as a reliable indicator of the formula's degree of validity, as guaranteed by Hoeffding's inequality \citep{hoeffding1963probability}. 

\begin{THEOREM}[Hoeffding's inequality]
Let $X_1,\ldots,X_m$ be independent random variables taking
values in $[0,1]$. Then for any $\epsilon>0$,
$$
\Pr\left[\frac{1}{m}\sum_{i=1}^mX_i\geq \mathbb{E}\left[\frac{1}{m}\sum_{i=1}^mX_i\right]+\epsilon\right]\leq e^{-2m\epsilon^2}.
$$
\end{THEOREM}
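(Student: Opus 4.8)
The plan is to use the standard exponential-moment (Chernoff) method together with a moment generating function bound for bounded random variables. Write $\bar X = \frac{1}{m}\sum_{i=1}^m X_i$ and $\mu = \mathbb{E}[\bar X]$, and set $Y_i = X_i - \mathbb{E}[X_i]$, so that each $Y_i$ is a mean-zero variable taking values in an interval of length $1$. The first step is to observe that for any $s>0$ the event $\{\bar X \geq \mu + \epsilon\}$ coincides with $\{\sum_i Y_i \geq m\epsilon\}$, which in turn equals $\{e^{s\sum_i Y_i} \geq e^{sm\epsilon}\}$ because $x\mapsto e^{sx}$ is increasing. Applying Markov's inequality to the nonnegative variable $e^{s\sum_i Y_i}$ gives
$$\Pr\left[\bar X \geq \mu + \epsilon\right] \leq e^{-sm\epsilon}\,\mathbb{E}\!\left[e^{s\sum_{i=1}^m Y_i}\right].$$

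Next I would exploit independence: since the $X_i$ (hence the $Y_i$) are independent, the moment generating function factorizes, $\mathbb{E}[e^{s\sum_i Y_i}] = \prod_{i=1}^m \mathbb{E}[e^{sY_i}]$. The crux is then a bound on each factor, namely \emph{Hoeffding's lemma}: if $Y$ is mean-zero and supported on an interval of length $\ell$, then $\mathbb{E}[e^{sY}] \leq e^{s^2\ell^2/8}$. Here $\ell = 1$, so each factor is at most $e^{s^2/8}$ and the product is at most $e^{ms^2/8}$. Combining this with the Chernoff step yields $\Pr[\bar X \geq \mu + \epsilon] \leq e^{-sm\epsilon + ms^2/8}$ for every $s>0$.

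Finally I would optimize the free parameter. The exponent $m\bigl(s^2/8 - s\epsilon\bigr)$ is minimized at $s = 4\epsilon$, giving the value $m(2\epsilon^2 - 4\epsilon^2) = -2m\epsilon^2$, and hence $\Pr[\bar X \geq \mu + \epsilon] \leq e^{-2m\epsilon^2}$, exactly as claimed.

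The main obstacle is establishing Hoeffding's lemma, which is where all the genuine work lies. I would prove it by setting $\psi(s) = \log \mathbb{E}[e^{sY}]$ and verifying that $\psi(0)=0$, that $\psi'(0) = \mathbb{E}[Y] = 0$, and that $\psi''(s) \leq \ell^2/4$ uniformly in $s$; the last bound holds because $\psi''(s)$ equals the variance of $Y$ under the exponentially tilted measure $d\mathbb{Q} \propto e^{sY}\,d\mathbb{P}$, and any variable confined to an interval of length $\ell$ has variance at most $\ell^2/4$ (Popoviciu's inequality). A second-order Taylor expansion with remainder then gives $\psi(s) \leq s^2\ell^2/8$. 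An alternative route that avoids differentiating through the change of measure uses convexity of $x\mapsto e^{sx}$ to dominate $e^{sY}$ by the secant line across the support interval and then optimizes the resulting elementary expression directly.
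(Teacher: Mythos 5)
The paper does not prove this statement; it invokes Hoeffding's inequality as a classical black-box result (citing Hoeffding, 1963), so there is no in-paper proof to compare against. Your argument --- the Chernoff exponential-moment method, factorization by independence, Hoeffding's lemma $\mathbb{E}[e^{sY}]\leq e^{s^2\ell^2/8}$ for a mean-zero variable on an interval of length $\ell$, and optimization at $s=4\epsilon$ giving the exponent $-2m\epsilon^2$ --- is the standard and correct derivation, and your sketch of Hoeffding's lemma via the tilted-measure variance bound (or the secant-line alternative) is sound.
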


The agent will have some knowledge base encoded in the system, and will also be able to sense the world around them and receive readings describing the current state of the world. These readings are generally correct for that environment but are neither fully accessible nor are they exact. As a consequence, the observations can be noisy or inconsistent with each other, but they are always consistent with the knowledge base. So in the spirit of knowledge expansion, the agent may be ignorant about many things and the agent is informed by the observations it makes. These observations only focus on a few properties of the world, as would be expected from most physical sensors in robotics and other applications. Formally we introduce a \textit{masking process} that randomly reveals only a few properties of the world \citep{Loizos2010}. These readings are conjunctions of propositional atoms and are drawn independently at random from some probability distribution $\pmb{M}$ over $\mathcal{L}_n$ which is unknown to the agent. For example, a smartwatch might only be getting readings about the heart rate of the person wearing the watch but not the blood oxygen levels \citep{rader2021}. In the multi-agent setting, sensors may only reveal what cards the agent itself holds but may not provide information about the cards held by the other agents as is intuitive.  An agent attempting to sense the same environment twice could end up with two different observations, and so the masking process captures this stochastic nature of sensing.

\begin{DEFINITION} [Masking Process] 

The masking process $\pmb{M}$ is a random mapping from a model $(e_A^k,e_B^j,w)$ to a propositional conjunction $\rho^{(c)} \in \mathcal{L}_n$ such that $(e_A^k,e_B^j,w) \vDash \rho^{(c)} $. Then $\pmb{M} (\mathcal{D})$ is a distribution over the set $OBS$, induced by applying the masking process to a world drawn from $\mathcal{D}$. 
    
\end{DEFINITION}

The masking process induces a probability distribution $\pmb{M}(\mathcal{D})$ over observations $\rho \in OBS$ modelling the readings of an agent's sensors. We assume from now on that this modelling is performed on the root agent's sensors. The masking process can be understood in two different ways: either the readings are absent due to a stochastic device failure, or the agent is unable to concurrently detect every aspect of the state of the world. In essence, the model incorporates unforeseen circumstances like a probabilistic failure or an agent's sensors' inability to provide readings \citep{Ferreira2005}. So the formalization of the reasoning problem should capture this limitation somehow. The reasoning problem of interest becomes deciding whether a query formula $\alpha$ is $(1-\epsilon)$-valid. Knowledge about the distribution $\mathcal{D}$ comes from the set of examples $\rho^{(c)} \in \mathcal{L}_n$. Additional knowledge comes from a collection of axioms, the knowledge base $\Sigma$. We do not have complete knowledge of the models drawn from $\mathcal{D}$, instead, we only have the observations $\rho$ sampled from $\pmb{M}(\mathcal{D})$ and the knowledge base $\Sigma$. 

We assume here two agents $A$ and $B$, but it can be generalized to multiple agents, from which agent $A$ is the root agent. The background knowledge is represented by $\Sigma, \Sigma' \in \mathcal{OL}_n$, where $\Sigma$ corresponds to agent $A$ and $\Sigma'$ corresponds to agent $B$. The input query $\alpha$ is of the form $M^l \alpha'$, where $M$ denotes a sequence of bounded modalities, i.e. $K_A K_B K_A \alpha ' = M^3 \alpha'$, the query is of maximal depth of $k, j$ which are the $i$-depths of the $k$-structures for agents $A$ and $B$. And finally, we draw $m$ partial observations which are of propositional format $\rho^{(1)},\rho^{(2)}, ..., \rho^{(m)}$. In implicit learning, the query $\alpha$ is answered from observations directly, without creating an explicit model. This is done by means of entailment: we repeatedly ask whether $O_A(\Sigma) \vDash [\rho^{(c)}] K_A  \alpha$ for examples $\rho^{(c)}\in \pmb{M}(\mathcal{D})$ where $c \in \{1,..., m\}$. So this entailment checking with respect to each observation $\rho^{(c)}$ becomes our best approximation to $(1-\epsilon)$-validity. If at least $(1-\epsilon)$ fraction of the examples entail the query $\alpha$, the algorithm returns \textit{Accept}. The estimation is more accurate the more examples we use. The concepts of accuracy and confidence are captured by the hyper-parameters $\gamma, \delta \in (0,1)$, where $\gamma$ bounds the accuracy of the estimate and $\delta$ bounds the probability the estimate is invalid.

\begin{DEFINITION}[Witnessed formula] 

The implicit knowledge for agent $A$'s  mental state is the set $I$ of $\alpha$'s such that with probability $1-\epsilon$ over $\rho$ drawn from $\pmb{M}(\mathcal{D})$ (i.e., for a model drawn from $\mathcal{D}$ and passed through $\pmb{M}$),  $O_A(\Sigma) \vDash [\rho] K_A(\alpha)$. We say that $\alpha$ is \emph{witnessed true} on $\rho$ in this event.
\end{DEFINITION}
 
Since we are only concerned with entailment and not proof theory here, there is essentially no distinction between the implicit knowledge itself and the provable consequences/entailments of the implicit knowledge. We can just talk about whether or not $\alpha$ is implicitly known to $A$. These formulas are witnessed true with probability $1- \epsilon$; in particular, a proportion of them may still evaluate to \textit{false} with probability up to $\epsilon$. We will now move on and motivate the learning algorithm. As observed before, the key step in the decision procedure is given the partial observations as discussed above and the background theory which is of the form $O_A(\phi \wedge O_B(\psi ...))$ or simply $O_A(\Sigma)$, we check the entailment of the query given the observations against the background theory. As we discussed in Theorem \ref{theorem_sensing}, this can be reduced to a statement of the following form: $O_A(\Sigma) \vDash \rho \wedge K_A(\rho \rightarrow \alpha)$.

\begin{algorithm}[h]
\KwIn{ $\Sigma$ set of sentences from root agent $A$; input query $\alpha$; partial observations: $\rho^{(1)}, \rho^{(2)}, ..., \rho^{(m)}$; hyper-parameters: $\epsilon, \gamma, \delta \in (0,1)$.
}

\KwOut{\textbf{Accept} if there exist formulas $I$ witnessed \textit{true} with probability at least $(1-\epsilon + \gamma)$ on $\pmb{M}(\mathcal{D})$ such that $O_A(\Sigma) \vDash K_A(I\rightarrow\alpha)$ OR \textbf{Reject} if $\Sigma  \vDash \alpha$ is not  $(1-\epsilon - \gamma)$-valid under the distribution $\mathcal{D}$.}

\BlankLine
\Begin{
$b \leftarrow \lfloor \epsilon \times m \rfloor , FAILED \leftarrow 0$.\\

\ForEach{$c$ in $m$} {
\If{$\rho^{(c)} \wedge O_A(\Sigma) \nvDash  K_A(\rho^{(c)} \rightarrow \alpha)$
}
{
Increment $FAILED$.
\\
\If{$FAILED > b$}{\bf return Reject}}
}
\bf return  Accept}
\caption{ DecidePAC Implicit learning reduction}\label{algo:il}
\end{algorithm}

\begin{THEOREM} 
[Multi-agent implicit epistemic learning]\label{thm:ilmas}
Let $\Sigma$ be the epistemic knowledge base of a root agent $A$, and suppose $\Sigma$ is perfectly valid for $\mathcal{D}$. 
We sample $m = \frac{1}{2\gamma^2}\ln\frac{1}{\delta}$ observations $\{\rho^{(1)}, \rho^{(2)}, ..., \rho^{(m)}\}$ from $\pmb{M}(\mathcal{D})$, which represent the partial information sensed by the root agent $A$. Then, with probability $1-\delta$:
\begin{itemize}
    \item[] \hspace{0.5cm}  1. If $\Sigma\rightarrow \alpha$ is not $(1- \epsilon - \gamma)$-valid with respect to the distribution $\mathcal{D}$, Algorithm \ref{algo:il} returns \textbf{Reject}; 
    \item[] \hspace{0.5cm} 2. If there exists some implicit knowledge $I$ such that $\beta\in I$ is witnessed true with probability at least $(1 - \epsilon + \gamma)$ and $O_A(\Sigma) \vDash K_A(\beta\rightarrow\alpha)$ then Algorithm \ref{algo:il} returns \textbf{Accept}.
 
\end{itemize}
\end{THEOREM}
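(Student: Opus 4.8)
The plan is to analyze Algorithm~\ref{algo:il} through the integer random variable $\mathrm{FAILED}$, which counts the samples $\rho^{(c)}$ on which the reduced check $\rho^{(c)}\wedge O_A(\Sigma)\vDash K_A(\rho^{(c)}\rightarrow\alpha)$ fails, and to relate its expectation to the two cases via Hoeffding's inequality. For each $c$ let $X_c=1$ if the check fails on $\rho^{(c)}$ and $X_c=0$ otherwise; the $X_c$ are i.i.d.\ with common failure probability $p=\Pr_{\rho\sim\pmb{M}(\mathcal{D})}[\rho\wedge O_A(\Sigma)\nvDash K_A(\rho\rightarrow\alpha)]$, and $\mathrm{FAILED}=\sum_{c}X_c$. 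The algorithm returns \textbf{Reject} exactly when $\mathrm{FAILED}>b=\lfloor\epsilon m\rfloor$ and otherwise \textbf{Accept}. With $m=\frac{1}{2\gamma^2}\ln\frac1\delta$ both Hoeffding tails evaluate to $e^{-2m\gamma^2}=\delta$, so it suffices to establish the two \emph{soundness} links below and then feed the resulting bound on $p$ into the appropriate tail.

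For the \textbf{Accept} case I would use Theorem~\ref{theorem_sensing}: a formula $\beta$ is witnessed true on $\rho$ precisely when $\rho\wedge O_A(\Sigma)\vDash K_A(\rho\rightarrow\beta)$. Combining this with the hypothesis $O_A(\Sigma)\vDash K_A(\beta\rightarrow\alpha)$ and the distribution axiom for $K_A$ (so that $K_A(\rho\rightarrow\beta)$ and $K_A(\beta\rightarrow\alpha)$ together entail $K_A(\rho\rightarrow\alpha)$), I obtain that whenever $\beta$ is witnessed true on $\rho$ the check for $\alpha$ succeeds as well. Hence the per-sample success probability for $\alpha$ is at least the witnessing probability of $\beta$, i.e.\ $1-p\geq 1-\epsilon+\gamma$, giving $p\leq\epsilon-\gamma$. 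The upper tail of Hoeffding then yields $\Pr[\tfrac1m\mathrm{FAILED}\geq p+\gamma]\leq\delta$, so with probability $1-\delta$ we have $\mathrm{FAILED}<\epsilon m$, whence $\mathrm{FAILED}\leq\lfloor\epsilon m\rfloor=b$ by integrality and the algorithm accepts.

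The soundness link for the \textbf{Reject} case is where I expect the real work to lie. The goal is to show that whenever the check succeeds on $\rho^{(c)}$, the underlying model $(e_A^k,e_B^j,w)$ that the masking process drew from $\mathcal{D}$ actually satisfies $\alpha$, so that the success probability is bounded above by the $(1-\epsilon-\gamma)$-validity of $\Sigma\rightarrow\alpha$. Since the masking process guarantees $(e_A^k,e_B^j,w)\vDash\rho^{(c)}$ with $\rho^{(c)}$ objective, $w$ agrees with $\rho^{(c)}$; since $\Sigma$ is perfectly valid for $\mathcal{D}$ and the root agent only-knows $\Sigma$, the model satisfies $\rho^{(c)}\wedge O_A(\Sigma)$, so the entailment forces $(e_A^k,e_B^j,w)\vDash K_A(\rho^{(c)}\rightarrow\alpha)$. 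The delicate step is converting this knowledge statement into truth of $\alpha$ at the model: I would argue that perfect validity places the real world $w$ among the worlds compatible with $\Sigma$, so that under the $O_A$ clause of the satisfaction definition the pair built from $(w,e_B)$ lies in $e_A^k$; applying the $K_A$ clause to this very pair and using $w\vDash\rho^{(c)}$ then yields $(e_A^k,e_B^j,w)\vDash\alpha$. This is exactly the point at which the multi-agent epistemic semantics, rather than the purely propositional argument of prior implicit-learning results, must be handled with care, and establishing this veridicality-at-the-real-world property is the main obstacle.

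With the soundness link in hand the rest of the \textbf{Reject} case is routine: the contrapositive gives $1-p=\Pr[\text{check succeeds}]\leq\Pr_{(e_A^k,e_B^j,w)}[(e_A^k,e_B^j,w)\vDash\Sigma\rightarrow\alpha]<1-\epsilon-\gamma$, so $p>\epsilon+\gamma$. Applying Hoeffding to $1-X_c$ (the lower tail) gives $\Pr[\tfrac1m\mathrm{FAILED}\leq p-\gamma]\leq\delta$, and since $\epsilon<p-\gamma$ this yields $\mathrm{FAILED}>\epsilon m\geq b$ with probability $1-\delta$, so the algorithm rejects. I would close by remarking that the two cases deliberately leave a gap around $\epsilon$ (they are not exhaustive), matching the usual PAC-decision guarantee, and that the only places the epistemic structure enters are Theorem~\ref{theorem_sensing} and the veridicality argument above, with everything else being the distribution-free concentration bound.
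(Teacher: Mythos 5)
Your proposal is correct and follows the same architecture as the paper's proof: the same two-case decomposition, the same per-sample soundness links (witnessing of $\beta$ plus Theorem~\ref{theorem_sensing} and the $K$ axiom for \textbf{Accept}; a drawn model that falsifies $\alpha$ for \textbf{Reject}), and the same application of Hoeffding's inequality with $m=\frac{1}{2\gamma^2}\ln\frac{1}{\delta}$. The one place you genuinely diverge is the \textbf{Reject}-case soundness link: you argue the forward direction (check succeeds $\Rightarrow$ the actual model satisfies $\alpha$), which forces you into the veridicality argument you correctly flag as delicate, namely showing that under $O_A$ and perfect validity of $\Sigma$ the real world $w$ paired with $e_B$ sits inside $e_A^k$ so that $K_A(\rho^{(c)}\rightarrow\alpha)$ collapses to $\alpha$ at $w$. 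The paper instead argues the contrapositive: a model satisfying $\Sigma$ and $\rho^{(c)}$ but falsifying $\alpha$ is a countermodel to the propositional entailment $\Sigma\vDash\rho^{(c)}\rightarrow\alpha$, which by the Representation Theorem immediately refutes $\rho^{(c)}\wedge O_A(\Sigma)\vDash K_A(\rho^{(c)}\rightarrow\alpha)$; this sidesteps veridicality entirely and keeps the modal reasoning confined to the already-established reduction. Both routes are sound (and the paper is no more rigorous than you are about the semantic fine print), so the trade-off is that your version works directly in the $k$-structure semantics while the paper's leans on the propositional reduction; your handling of the integrality of the threshold $b=\lfloor\epsilon m\rfloor$ is also more careful than the paper's, which elides it.
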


\begin{proof} \textbf{Case 1.} We assume that $\alpha$ is not $(1-\epsilon - \gamma)$-valid and show that the algorithm rejects with probability $1-\delta$. By definition, if $\Sigma\rightarrow \alpha$ is not $(1-\epsilon - \gamma)$-valid, then since $\Sigma$ is consistent with worlds sampled from $\mathcal{D}$, $\alpha$ is false with probability at least $\epsilon + \gamma$; in turn, since an observation $\rho^{(c)}$ produced by $\pmb{M}$ is consistent with this world, it also must not entail $\alpha$ with probability at least $\epsilon + \gamma$, and thus, $\Sigma \nvDash \rho^{(c)}\rightarrow\alpha$. In turn, then, $\rho^{(c)} \wedge O_A(\Sigma) \nvDash  K_A(\rho^{(c)} \rightarrow \alpha)$. It follows from Hoeffding’s inequality now that for $m = \frac{1}{2\gamma^2}\ln\frac{1}{\delta}$, the number of failed checks is at least $\epsilon m$ with probability at least $1- \delta$, so DecidePAC returns \textbf{Reject}.

\noindent
\textbf{Case 2.} Assume that there exists such implicit knowledge $I$ such that $\beta\in I$ is witnessed \textit{true} with probability at least $1-\epsilon +\gamma$. By definition, when $\beta$ is witnessed, $\rho^{(c)}\land O_A(\Sigma)\models [\rho^{(c)}]K_A(\beta)$. We also have, by assumption, that $O_A(\Sigma)\models K_A(\beta\rightarrow\alpha)$, and hence $\rho^{(c)}\land O_A(\Sigma)\models [\rho^{(c)}]K_A(\alpha)$. By Theorem~\ref{theorem_sensing}, this may be reformulated as $\rho^{(c)}\land O_A(\Sigma)\models \rho^{(c)}\land K_A(\rho^{(c)}\rightarrow \alpha)$. Thus, Hoeffding's inequality gives that for $m = \frac{1}{2\gamma^2}\ln\frac{1}{\delta}$, $\beta$ is witnessed in at least $(1-\epsilon) m$ observations with probability $1-\delta$, and in turn $\rho^{(c)}\land O_A(\Sigma) \nvDash K_A(\rho^{(c)}\rightarrow \alpha)$ in fewer than $\epsilon m$ observations. The algorithm will therefore return \textbf{Accept} with probability $1-\delta$.
\end{proof}
Let us demonstrate the functionality of the PAC framework by going through the example previously outlined and considering a real-life scenario where some statement that often (but not always) holds may be inferred from the information available to the agent.

\begin{EXAMPLE}
[Card game with partial observations] Similarly to the previous example, we have the same players and card rules. Let's start with the first property outlined earlier where after sensing the card, agent A can reason about what card it holds. The initial knowledge base is the same as previously: $\theta = O_A(\phi \wedge O_B(\phi))$, and suppose that in four different games, the observations are received in series as follows: $\{ \rho_A^{(1)}: N_A = \#4, \rho_A^{(2)}: N_A = \#3, \rho_A^{(3)}: N_A = \#4, \rho_A^{(4)}: N_A = \#4  \}$. We will take as an example the second property outlined above, where the root agent is asked to reason whether he will know that his card number is $\#4$, that is $\theta \vDash [\rho] K_A (N_{A}=\#4)$. For the first observation, $\rho^{(1)}$, the entailment is straightforward since the observation is consistent with the query about the agent's knowledge. The entailment problem for the second observation $\rho^{(2)}$ becomes $\theta \vDash [ N_A = \#3] K_A (N_{A}=\#4)$. Once $N_A$ is observed, then the question is whether there exist some worlds $w' \in W$ such that $w'[N_A = \#3] = w[N_A = \#4]$. Such worlds do not exist, since agent A could not have picked both cards 3 and 4 at the same time, so the query is not entailed, and $FAILED$ increments. Similarly, the next two observations entail the query. After iterating through every observation the DecidePAC algorithm determines the degree of validity of the query, $(1- \epsilon)$-validity. For this case, if $\epsilon$ was set to a value of $0.25$, then the algorithm returns Accept with $0.75$-validity for this query.

\end{EXAMPLE}

\section{Tractability and Future Work}
One of the main motivations of implicit learning, following the works of Khardon and Roth on learning-to-reason framework \citep{KhardonRothJournal1997}, and Juba on implicit learnability \citep{juba2013}, was to enable tractable learning for reasoning by bypassing the intractable step of producing an explicit representation of the knowledge. Indeed, if the observations are sufficiently nice, reasoning with the PAC-semantics may even be more efficient than classical reasoning \citep{juba2015restricted}. The problem is only more acute in multi-agent settings: in prior work on multi-agent reasoning \citep{Lakemeyer2012EfficientRI}, Lakemeyer proposed a polynomial time algorithm if the knowledge base was encoded as a proper epistemic knowledge base which proved to be computationally costly. Indeed, most prior work on efficient multi-agent reasoning requires such an expensive compilation step in order to introduce a conjunctive observation to an agent's knowledge base. By using the Representation Theorem \citep{bellejair}, we can reduce the entailment checks in Algorithm~\ref{algo:il} to propositional queries. In turn, if these propositional queries are members of an adequately restricted fragment such as Horn clauses or more generally, those provable using bounded-space treelike resolution, then polynomial-time algorithms exist for deciding such queries~\citep{kullmann99}. Similarly, Liu et al. \citep{Liu2004ALO} proposed a sound and (eventually) complete method for reasoning in first-order logic that is polynomial in the size of the knowledge base. The tractability is guaranteed using a notion of mental effort characterised by the parameter \textit{k}, for example, if $k=0$, then a sentence $\alpha$ is believed only if it appears explicitly in the given knowledge base. The downside of this approach is that the complexity of converting the entire knowledge base to a required \textit{CNF} formula increases exponentially as the parameter $k$ increases. What remains to be done, that we leave for future work, is to establish a guarantee that the propositional reasoning needed to decide a query remains in such a tractable fragment. Alternatively, it might be sensible to take one of the limited belief reasoning logics \citep{liu2005tractable} and integrate it into our learning framework. All of these are interesting directions for future work.

\section{Related Work}
Several results have been obtained by leveraging PAC-semantics' advantages in order to obtain implicit learning. After first being formalised by Juba \citep{juba2013} for the propositional logic fragments and obtaining more efficient reasoning there \citep{juba2015restricted}, Belle and Juba \citep{bellejuba2019} demonstrated an integration of the PAC-semantics with first-order logic fragments. Later on, work by Mocanu et al. \citep{mocanu} showed that polynomial reasoning can be achieved efficiently with partial assignments for standard fragments of arithmetic theories. They proposed a reduction from the learning-to-reason problem for a logic to any sound and complete solver for that fragment of logic. 
On the empirical side, Rader et al. \citep{rader2021} proposed an empirical study for learning optimal linear programming objective constraints which significantly outperforms the explicit approach for various benchmark problems. Although polynomial guarantees are obtained for various language domain fragments, they do not offer the expressiveness of a multi-agent language, which is what our work considered.

On the epistemic logic axis, Lakemeyer and Lesperance have proposed a form of epistemic reasoning in which the knowledge base is encoded as a set of modal literals (PEKB) \citep{Lakemeyer2012EfficientRI}. Although this approach showed some computational speedup when it comes to reasoning, it did carry along the disadvantage of converting both the knowledge base and the query to certain formats before entailment is checked. This conversion becomes computationally costly as the modal depth increases, and although this form of knowledge might be useful for certain applications, it does not handle an important epistemic notion: \textit{knowing-whether} \citep{FanWD13}. That means that the language does not cover any form of incomplete knowledge or disjunctions (horn clauses) and so very limited forms of inference were possible. Although an extension to knowing-whether was later proposed in work by Miller et al. \citep{miller2016}, it still lacks arbitrary disjunctions, which our framework can handle fully. In a non-modal setting, Lakemeyer and Levesque \citep{LakemeyerLevesque2004} proposed a tractable reasoning framework for disjunctive information. In another work, Fabiano \citep{Fabiano2019MAEP} proposed an action-based language for multi-agent epistemic planning and implemented an epistemic planner based on it.  Work by Muise et al. \citep{Muise2021MAEP} addresses the task of synthesizing plans that necessitate reasoning about the beliefs of other agents. Learning is not addressed in any of these. Likewise, in research work by Lakemeyer et al. \citep{Lakemeyer2020} a logic of limited belief with a possible-worlds semantics is proposed. Each epistemic state is represented as sets of three-valued possible worlds, which allows some tractability with epistemic reasoning, but it is limited to the single agent case and also does not address learning. In the context of dynamic epistemic logic \citep{sep-dynamic-epistemic}, there is some recent work on ``qualitative learning'' \citep{Bolander2015} which considers learning in the limit for propositional action models. This is very different in thrust from ours, where we are interested in answering queries from noisy observations with robustness guarantees. Although there are many research works focusing on learning in multi-agent settings for coordination and competition \citep{stefano2017}, we are not aware of any work that addressed arbitrary nesting of belief operators in the style of $K45_n$ in a learning setting.

\section{Conclusion}
In this work, we demonstrated new PAC learning results with multi-agent epistemic logic. We considered the PAC semantics framework which integrates real-time observations from the current world along with the background knowledge in order to decide the entailment of epistemic states of the agents. We leveraged some recent results on multi-agent only knowing, namely the Representation Theorem, in order to reduce modal reasoning to propositional reasoning. We have formalised the learning process and discussed the sample complexity and correctness of an algorithm for learning. The algorithm is in principle applicable to any multi-agent logic, as long as a sound and complete procedure is used to evaluate epistemic queries against an epistemic knowledge base.
If one did not take into consideration the time complexity and allow for the full $K45_n$ reasoning instead, then we could swap the entailment checking in the algorithm for general validity in that particular logic.  However, this inevitably implies that it would become intractable. 
Considering that reasoning in the full $K45_n$ is PSPACE-complete, this gave us the incentive to focus on the only knowing angle instead, which allows us to at least reduce entailment to propositional reasoning. As discussed in the related work section, there are many promising ideas from  other approaches that might provide tractability either in the only-knowing setting (via limited reasoning \citep{liu2003tractability, liu2005tractable}) or a more general multi-agent knowledge setting, such as  \citep{Lakemeyer2012EfficientRI}.

\section*{Acknowledgements}

\bibliographystyle{acmtrans}
\bibliography{new_tlp2egui}

\label{lastpage}
\end{document}